\documentclass[12pt,cleveref]{colt2026}
\jmlrproceedings{}{Preliminary version posted for early dissemination. Comments welcome; a revised version will follow.}
\jmlrvolume{}
\jmlryear{}
\jmlrpages{}
\usepackage{amsmath,stmaryrd,mathtools}
\usepackage{thm-restate}
\usepackage{dsfont}
\usepackage[utf8]{inputenc}

\newcommand{\expect}{\mathbb E}
\newcommand{\bits}{\mathbb B}
\newcommand{\ints}{\mathbb Z}
\newcommand{\reals}{\mathbb R}
\newcommand{\obsrew}{o\;\!\!r}
\newcommand{\indic}{\mathds{1}}
\newcommand{\infirst}{\stackrel{1}{\in}}
\newcommand{\innth}[1]{\stackrel{#1}{\in}}
\newcommand{\golden}{{\pi^\mathrm{GH}}}
\newcommand{\pf}[1]{\overline{#1}}
\newcommand{\lplus}{\stackrel{+}{<}}
\newcommand{\lmul}{\stackrel{\times}{<}}

\title{Golden Handcuffs make safer AI agents}
\coltauthor{
  \Name{Aram Ebtekar} \Email{aramebtech@gmail.com}\\
  \Name{Michael K. Cohen} \Email{mkcohen@berkeley.edu}\\
  \addr University of California, Berkeley
}

\begin{document}

\maketitle

\begin{abstract}
Reinforcement learners can attain high reward through novel unintended strategies. We study a Bayesian mitigation for general environments: we expand the agent's subjective reward range to include a large negative value $-L$, while the true environment's rewards lie in $[0,1]$. After observing consistently high rewards, the Bayesian policy becomes risk-averse to novel schemes that plausibly lead to $-L$. We design a simple override mechanism that yields control to a safe mentor whenever the predicted value drops below a fixed threshold. We prove two properties of the resulting agent: (i) \emph{Capability:} using mentor-guided exploration with vanishing frequency, the agent attains sublinear regret against its best mentor. (ii) \emph{Safety:} no decidable low-complexity predicate is triggered by the optimizing policy before it is triggered by a mentor.
\end{abstract}

\begin{keywords}
  Safety, pessimism, reinforcement learning, Knightian uncertainty, Kolmogorov complexity, stopping complexity, Solomonoff induction, AIXI, explore-exploit dilemma, reward hacking
\end{keywords}

\section{Introduction}

AI agents are increasingly deployed in general environments, where the traditional Markov, ergodicity, and full-observability assumptions do not apply \citep{aslanides2017universal,wang2026openclaw}. In these settings, AIXI is the Bayesian policy that maximizes expected value under a universal prior based on \citet{solomonoff1964formal}. Variants of AIXI can learn their environment with strong sample efficiency guarantees \citep{hutter2005universal,hutter2024introduction,meulemans2025embedded}. While AIXI is incomputable, it has inspired computable approximations \citep{veness2011monte}. It can be viewed as an idealization of the simplicity bias exhibited by large neural networks \citep{buzaglo2024uniform,lotfi2024non,mingard2025deep}, making AIXI suitable for mathematical modeling of advanced AI agents.

A fundamental problem with general environments is that, in order to asymptotically learn an optimal policy, an agent must eventually explore every finite sequence of actions, even if some lead to irrecoverable states \citep{cohen2021curiosity}.\footnote{Unlike its asymptotically optimal variants \citep{lattimore2011asymptotically,cohen2019strongly}, AIXI's exploration is prior-directed and may cease prematurely; whether it avoids traps or is steered into them depends on its prior \citep{orseau2010optimality,leike2015bad}.} Once we have a near-optimal policy, a second problem emerges: the policy optimizes a value function that is not fully aligned with the designer's intended objective, since the latter resists precise specification. It has been argued that a sufficiently capable reward maximizer can be expected to ``wirehead'' its reward channel, ``reward hack'' in unanticipated and harmful ways, or seek power to influence future outcomes \citep{orseau2011self,hadfield2017off,zhuang2020consequences,turner2021optimal,everitt2021reward,cohen2022advanced,russell2022human}.

We address both problems with a pessimistic variant of AIXI that occasionally defers to one or more safe mentor policies. Whereas \emph{optimistic} algorithms that encourage exploration \citep{sunehag2015rationality} are vulnerable to traps and misspecification, our pessimistic agent does not initiate exploration on its own. By deferring exploratory actions to mentors, the agent avoids irrecoverable states that the mentors avoid, and cannot act on misspecification in regimes that the mentors would not enter.

To be more precise, we scale AIXI's observed rewards upward, so that they concentrate near the top of the range the prior considers possible. We think of our agent as being bound by ``Golden Handcuffs'': after learning from experience to expect high rewards, the agent becomes averse to exploring novel situations in which the possibility of low rewards cannot be ruled out. We formalize novelty in terms of the \emph{stopping complexity}, first introduced by \citet{vovk2017universal} and \citet{andreev2018plain}. Intuitively, the novelty of a moment in time is given by how short we can make a computable criterion that, while scanning the agent-environment interaction history, halts at that particular moment. Such a criterion distinguishes the present moment from the past; at these novel moments, the future under the universal prior becomes ambiguous, so our agent is pessimistic about what follows.

The Golden Handcuffs agent follows a policy $\golden$ that usually matches this pessimistic AIXI. It defers to a mentor policy occasionally, at random times for exploration, as well as anytime that it loses confidence in achieving rewards near the top of the range. This enables us to prove that the agent learns to perform at least as well as the best mentor, with sublinear regret of order $T^{\frac 23+\epsilon}$ by time $T$ (\Cref{thm:capability}), and never takes unsafe simple actions that the mentors would never take (\Cref{cor:safety}).

The paper is organized as follows. In \Cref{sec:related}, we review additional related works. In \Cref{sec:prelim}, we introduce notation. In \Cref{sec:agent}, we present the new Golden Handcuffs agent, which interleaves Bayesian expected reward maximization with mentor actions. In \Cref{sec:capability}, we prove that despite using the mentor policies with vanishing frequency, the agent achieves low regret against an adaptive selection of the best mentor policy. In \Cref{sec:safety}, we prove that simple unprecedented events can only be triggered by a mentor, not by the optimizing policy. If the mentors are safe, this makes $\golden$ safe in the sense of avoiding actions that the mentors would avoid. In \Cref{sec:discussion}, we discuss the approach and its limitations, before concluding in \Cref{sec:conclusion}. Details of the capability proofs are relegated to \Cref{apx:proofs}.

\section{Related work}
\label{sec:related}

\citet{garcia2015comprehensive} and \citet{russell2022human} survey different aspects of safety for RL agents, the former focusing on robustness in dangerous environments, and the latter on alignment with humans. Our Golden Handcuffs algorithm handles both cases by assigning negative value to novelty, which is a cue to both kinds of danger.

Some recent work combines optimism toward opportunities with pessimism against danger. To distinguish opportunities from danger in unknown environments, additional modeling assumptions are required \citep{curi2021combining,bura2022dope,wendl2025safe}.

In fully general environments, an agent cannot guarantee safety on its own, so some form of external guidance is needed. \citet{saunders2018trial} present Human Intervention RL (HIRL), initially allowing a human overseer to block catastropic actions, until an AI overseer trained on the human interventions is able to take over. Actions flagged by the overseer are assigned a negative reward. In contrast, our approach assigns negative rewards based on intrinsic novelty, and only requires the mentor(s) to know a safe action, not necessarily to recognize every catastrophic action.

Methods for \emph{learning to defer} to human experts \citep{madras2018predict,hemmer2023learning,stronglearning} were recently extended to the sequential setting \citep{joshi2023learning,rayan2025learning}. These algorithms decide when to defer by comparing learned success rates of the AI model and human expert. In contrast, our notion of novelty captures what is sometimes called \emph{Knightian uncertainty}, or epistemic ignorance about regimes for which the agent has no learned model \citep{knight1921risk}. The universal prior lets us act under such ignorance within a fully Bayesian framework, in contrast to approaches based on max-min expected utility across multiple priors \citep{epstein2004intertemporal,hansen2011robustness,kishishita2026search}.

\citet{cohen2022fully} study a regularized form of imitation learning that refuses to act when it has Knightian uncertainty about how the human would act. \citet{coste2023reward} maximize the worst-case among multiple reward models. \citet{cohenrl} regularize RL, maximizing rewards subject to staying close to a demonstrated human policy, thus avoiding any action that the human would never take. In contrast to these works, our Golden Handcuffs agent straightforwardly maximizes expected value at almost all times, with the mentor(s) only intervening with vanishing frequency.

Our work is most closely related to \citet{cohen2020pessimism}. Recognizing the difficulty of formalizing alignment, they propose to contain the most severe risks within a definition of novelty related to stopping complexity, albeit without mentioning the latter. Their agent is more complicated than ours, both in choosing when to query the mentor (based on Thompson sampling), and in instilling pessimism (by pessimizing over world models instead of simply scaling the reward channel). They only prove their agent's performance to be asymptotically competitive with one mentor, without convergence rates like ours. Their safety bound is also looser, adding a dependence on the environment's complexity.

\section{Notation}
\label{sec:prelim}

For any set $A$, write $A^*$ (resp. $A^\infty$) for the set of finite (resp. infinite) sequences of elements of $A$. Using $\bits:=\{0,\,1\}$, $\bits^*$ is the set of binary strings. For $a,b\in A^*\cup A^\infty$, $a\sqsubset b$ means that $a$ is a prefix of $b$, $|a|$ denotes the length of $a$, $a_i$ denotes the $i$'th element of $a$, and if $a$ is finite, $ab:=a_1\ldots a_{|a|}b_1\ldots b_{|b|}$ denotes the concatenation of $a$ and $b$. We use the shorthands $a_{i:j}:=a_ia_{i+1}\ldots a_j$ and $a_{<i}:=a_{1:i-1}$.

A set of strings is \emph{self-delimiting} if none of its elements is a proper prefix of another. Let $\mathcal A$, $\mathcal O$, and $\mathcal R\subset\reals$ be countable sets of available actions, observations, and rewards, respectively. We assume an encoding (i.e., an injection into a self-delimiting set of binary strings) for each of these sets, so that with a minor abuse of notation we can also treat the elements of $\mathcal A,\mathcal O,\mathcal R$ as binary strings. We can likewise encode the positive integers $n\in\mathbb Z^+$ as strings $\pf n\in\bits^*$ of length $|\pf n| \lplus \log n + 2\log\log (n+1)$. $\log$ denotes the binary logarithm, while a $+$ or $\times$ above an (in)equality sign means that it holds up to a constant additive term or multiplicative factor, respectively. For example, $f(x) \lplus g(x)$ and $f(x) \lmul g(x)$ mean $f(x) < c+g(x)$ and $f(x) < c\cdot g(x)$, respectively, for some positive constant $c$. For a proposition $E$, the indicator $\indic(E)$ is 1 if $E$ is true and $0$ if $E$ is false.

Denote the set of possible data at each timestep by $\mathcal H := \mathcal A\times\mathcal O\times\mathcal R$. We can unambiguously concatenate the self-delimiting encodings of $\mathcal A,\mathcal O,\mathcal R$ to get a string encoding of $h_t := a_to_tr_t\in\mathcal H$, or even $h_{<t}\in\mathcal H^*$. At each timestep $t\in\mathbb N$, the agent's stochastic policy $\pi:\mathcal H^*\rightarrow\Delta(\mathcal A)$ samples the next action $a_t \sim \pi(h_{<t})$ as a function of the history so far. Then, the stochastic environment $\nu:\mathcal H^*\times\mathcal A\rightarrow\Delta(\mathcal O\times\mathcal R)$ samples the next observation and reward percepts $o_tr_t\sim\nu(h_{<t}a_t)$. We also allow environment models to be \emph{semimeasures}, assigning some probability to outputting no next element, in which case the interaction ends. Thus, we have
\[\sum_{a_t\in\mathcal A} \pi(a_t\mid h_{<t}) = 1\quad\text{and}
\quad\sum_{o_tr_t\in\mathcal O\times\mathcal R} \nu(o_tr_t\mid h_{<t}a_t) \le 1,\]
with equality in the true environment, which we will denote by $\mu$. Altogether, a policy and environment determine a joint probability distribution $\mathbb P_\nu^{\pi}$ on finite and infinite histories, under which the probability of a history starting with $h_{1:T}\in\mathcal H^*$ is given by
\begin{align}
\mathbb P_\nu^{\pi}(h_{1:T})
&:=\prod_{t=1}^{T} \pi(a_t\mid h_{<t})\nu(o_tr_t\mid h_{<t}a_t) \label{eq:joint1}
\\& = \pi(a_{1:T}\parallel \obsrew_{<T})\nu(\obsrew_{1:T}\parallel a_{1:T}), \label{eq:joint2}
\end{align}
where $\pi(a_{1:T}\parallel \obsrew_{<T}):= \prod_{t=1}^{T} \pi(a_t\mid h_{<t})$ and $\nu(\obsrew_{1:T}\parallel a_{1:T}):= \prod_{t=1}^{T} \nu(o_tr_t\mid h_{<t}a_t)$ denote the causal outcome probabilities of a sequence of actions or percepts, respectively, when the other is held fixed.

\section{Agent definition}
\label{sec:agent}

Our Bayesian reinforcement learning agent models its environment as a mixture $\xi$ over countably many models $\nu$, with weights $w_\nu>0$ satisfying $\sum_\nu w_\nu \le 1$:
\begin{equation}
\label{eq:mixtureprior}
\xi(\obsrew_{<t}\parallel a_{<t})
= \sum_\nu w_\nu \nu(\obsrew_{<t}\parallel a_{<t}),
\end{equation}
The true environment $\mu$ is assumed to be included in the mixture, with weight $w_\mu$.

Fix a \emph{discount factor} $\gamma\in (0,1)$. For a policy $\pi$ and environment $\nu$, the \emph{value function} is the expected normalized discounted sum of future rewards:
\begin{equation}
\label{eq:value}
V_\nu^\pi(h_{<t}) := \expect_\nu^\pi\left[(1-\gamma)
\sum_{s=0}^\infty \gamma^s r_{t+s}
\,\Big|\, h_{<t}\right]
\end{equation}
Given the mixture prior $\xi$, the Bayes $\xi$-optimal policy is given by
\[\pi_\xi^* \in \arg\max_{\pi} V_\xi^\pi(h_{<t}),\]
with ties broken arbitrarily. Its expected total discounted reward is given by
\[V_\xi^*(h_{<t}) := \max_\pi V_\xi^\pi(h_{<t}).\]

Note that this policy is optimal with respect to $\xi$ rather than the true unknown environment $\mu$. In fact, \citet{leike2015bad} show that $\pi_\xi^*$ might never become $\mu$-optimal, even asymptotically, if it believes the cost of exploring is too high. Policies that explore more can become asymptotically optimal, but \citet{cohen2021curiosity} show that such policies must eventually try every possible sequence of actions, even those that destroy the agent. Moreover, a truly $\mu$-optimal policy might be catastrophically misaligned: a strict reward maximizer would behave like a literalist genie, maximizing a formally specified value function at the expense of our intended objective \citep{hadfield2017off,turner2021optimal,zhuang2020consequences,everitt2021reward,cohen2022advanced}.

Instead, we propose \emph{selective} exploration guided by a set of one or more trusted mentor policies $\mathcal T$, so that the agent can learn to perform well without doing anything that would be unanimously avoided by the mentor policies. The mentors can be human operators that are available to take the wheel as needed, or trusted AI policies.

To complete the agent's construction, let us review our two core desiderata. First, we want it to perform well despite not knowing $\mu$. By default, it will follow a reward-maximizing policy $\pi_\xi^*$ for some Bayesian prior $\xi$; however, greedily exploiting $\xi$ risks neglecting some good strategies known by the mentor policies. To ensure sufficient safe exploration, at random intervals of decreasing frequency, the agent switches to following a rollout from one of the mentor policies.

Our second desideratum, for safety, is that the agent should avoid doing anything unprecedented. This is achieved by concentrating the values of reward percepts in the true environment $\mu$ to the top of the range of subjectively possible rewards in $\xi$. For example, if $\xi$ offers rewards from $\mathcal R\subset [0,\,1]$, we scale the true rewards to $[1-\delta,\,1]$ for a small $\delta > 0$. For ease of interpretability, we prefer instead to fix the true reward range to $[0,\,1]$, and get the same effect by letting the subjective mixture $\xi$ offer rewards from an expanded set $\mathcal R\subset [-L,\,1]$ that includes a minimum element $-L:=1-1/\delta$. Either way, our agent is effectively bound by ``Golden Handcuffs'', happy to continue receiving near-maximal rewards, and averse to any change that risks a massive loss, since there is little left to gain.

Whenever the $\xi$-optimal policy's value function is reasonably high (say above $-1$), it can be fairly confident that it will stay within the well-understood regime in which the likelihood of receiving $-L$ reward has become small. If the value function gets too low, we can no longer be confident in the safety of $\pi_\xi^*$, so we program the agent to follow a mentor policy until confidence is restored.

We denote the Golden Handcuffs agent's policy by $\golden$; \Cref{alg:safe} implements it in pseudocode. At each time step $t$, $\golden$ chooses an action according to either the optimizing policy $\pi_\xi^*$ or one of the mentor policies $\tau\in\mathcal T$. Two conditions can trigger a mentor policy: (1) the safety trigger $V^*_\xi(h_{<t}) \le -1$, at which point we yield to any mentor for this step; and (2) the random trigger with probability $\eta(t)$, at which point we sample a random mentor $\tau\in\mathcal T$ for the next $H(t)$ steps. Each mentor $\tau$ has a preset probability $w_\tau$ of being chosen, with $\sum_{\tau\in\mathcal T}w_\tau = 1$. According to a preset schedule, $\eta$ decreases and $H$ increases.

While we focus our analysis on \Cref{alg:safe} for its intuitive clarity, we expect minor variants of the Golden Handcuffs agent to behave similarly. For example, we can replace the negative rewards in $\mathcal R$ with a single ``hell'' value, which overrides all future rewards and sets them to $-L$ forever. We can also replace the trigger $V^*_\xi(h_{<t}) \le -1$ with a check that ``hell'' has a substantial likelihood of arising in the near future.

In \Cref{eq:value}, there are multiple ways to define the expected value of $r_t$ when it is distributed according to a semimeasure: we can condition on $r_t$ being defined, effectively normalizing the semimeasure; or we can treat $r_t$ as having some default value whenever it is undefined. We use the default value $0$ for simplicity, but expect our results to hold for the normalization as well as any default value in $[-L,\,1]$. Alternatively, we can confine the modeled rewards to $[0,\,1]$, but replace undefined rewards with a default of $-L$, equivalent to the ``hell'' state. This pessimistic expected value is an instance of the Choquet integral; see \citet{wyeth2025value}.

\SetKwIF{WithProb}{}{}{with probability}{do}{}{}{}
\begin{algorithm2e}
\DontPrintSemicolon
%\small
\caption{$\golden$ Policy-Environment Interaction Loop}
\label{alg:safe}
\KwData{$\eta(t),H(t),\mathcal T$}
%\KwResult{$y = x^n$}
$\texttt{rollout\_steps} \gets 0$\;
$\tau \gets $ arbitrary policy in $\mathcal T$\;
\For{$t=1,2,3,\ldots$}{
    \uIf{$\texttt{\upshape rollout\_steps} = 0$}{
    \uIf{$V^*_\xi(h_{<t}) \le -1$}{
      $\texttt{rollout\_steps} \gets 1$\;
      $\tau \gets $ arbitrary policy in $\mathcal T$\;
    }
    \uWithProb{$\eta(t)$}{
      $\texttt{rollout\_steps} \gets H(t)$\;
      $\tau \gets $ random policy in $\mathcal T$ with probabilities $w_\tau$\;
    }
  }
  \uIf{$\texttt{\upshape rollout\_steps} > 0$}{
    $\texttt{rollout\_steps} \gets \texttt{rollout\_steps}-1$\;
    Sample an action from the mentor policy $a_t \sim \tau(h_{<t})$\;
  }
  \Else{
    Take a $\xi$-Bayes optimal action $a_t \sim \pi_\xi^*(h_{<t})$\;
  }
  Sample percepts from the environment $o_tr_t \sim \mu(h_{<t}a_t)$\;
  Extend the history with $h_t \leftarrow a_to_tr_t$\;
}
\end{algorithm2e}

\section{Capability analysis}
\label{sec:capability}

In order to analyze the policy $\golden$ from \Cref{alg:safe}, we need some additional notation. The behavior of this policy at time step $t$ depends not only on the history $h_{<t}$, but also on its internal state $\sigma_t\in\{0\}\cup(\mathbb Z^+\times\mathcal T)$. We write $\sigma_t = 0$ if at the start of iteration $t$ the agent is not exploring, i.e., $\texttt{rollout\_steps} = 0$. Otherwise, we write $\sigma_t = (n,\tau)$ to mean that the actions $a_{t:t+n-1}$ will be sampled from the mentor policy $\tau$. Note that we always have $\sigma_1=0$. We write $V_\nu^\golden(h_{<t},\sigma_t)$ for the expected value of policy $\golden$ in state $\sigma_t$, in an environment $\nu$ starting from history $h_{<t}$. Marginalizing out the state variable recovers
\begin{equation}
\label{eq:marginalize_state}
V_\nu^\golden(h_{<t}) := \expect_\nu^\golden\left[V_\nu^\golden(h_{<t},\sigma_t)\mid h_{<t}\right].
\end{equation}

Our main capability theorem says that, despite not knowing the true environment $\mu$, $\golden$ learns to become competitive with all of the mentor policies $\tau\in\mathcal T$. That is, at almost all time steps $t$, $V_\mu^\golden(h_{<t})$ approximates or exceeds $\max_{\tau\in\mathcal T_\epsilon} V_\mu^\tau(h_{<t})$, where the maximum is taken over all mentors with sufficient sampling weight $\mathcal T_\epsilon:=\{\tau\in\mathcal T:\,w_\tau\ge\epsilon\}$.

Since we cannot rule out the possibility that following $\tau$ for a long time will lead to high rewards thereafter, such a guarantee requires exploration rollouts of length comparable to the effective discount time horizon of the agent. By increasing the horizon length $H(t)$ over time, we ensure that the approximation error vanishes asymptotically.

\begin{restatable}[Capability]{theorem}{thmcapability}
\label{thm:capability}
Consider the policy $\golden$ from \Cref{alg:safe}, with
\begin{equation}
\label{eq:exploreparams}
H(t):=\frac{\ln t}{6(1-\gamma)}
\quad\text{and}\quad\eta(t)=t^{-1/3}.
\end{equation}
Suppose the true environment $\mu$ satisfies $\mathbb P_\mu^{\golden}(\forall t\in\ints^+,\,r_t\ge 0) = 1$. Then for all $\epsilon>0$,
\begin{align*}
&\expect_\mu^{\golden}\left[\sum_{t=1}^T\left(\max_{\tau\in\mathcal T_\epsilon\cup\{\golden\}}V_\mu^{\tau}(h_{<t}) - V_\mu^{\golden}(h_{<t})\right)^2\right]
\\&\le \expect_\mu^{\golden}\left[\sum_{t=1}^T\max\left(0,\;\max_{\tau\in\mathcal T_\epsilon}V_\mu^{\tau}(h_{<t}) - V_\mu^{\golden}(h_{<t},\sigma_t)\right)^2\right]
\\&\le \frac{1}{4(1-\gamma)} T^{2/3}\ln T
+ \frac{12(L+1)^2}{(1-\gamma)^2} T^{1/3}
+ \frac{8L(L+1)^2}{(1-\gamma)^3} \ln\frac{1}{w_\mu}
\\&\qquad\qquad\qquad\qquad+ \frac{4(L+2)^2}{\epsilon^2} T^{2/3} \ln\frac{1}{w_\mu}
+ 6(L+1)^2 T^{2/3}.
\end{align*}

Moreover, for all $\epsilon > 0$,
\begin{equation}
\label{eq:weakoptimality}
\lim_{T\rightarrow\infty}\frac{\expect_\mu^{\golden}\left[
\#t\le T:\;\max_{\tau\in\mathcal T_{\epsilon/\log T}}V_\mu^{\tau}(h_{<t}) - V_\mu^{\golden}(h_{<t}) \ge \epsilon
\right]}{T^{\frac 23+\epsilon}} = 0.
\end{equation}
\end{restatable}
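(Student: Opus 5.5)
The plan is to bound the second (squared, one-sided) regret sum by splitting each timestep's regret into three sources: (a) \emph{truncation} of mentor rollouts to length $H(t)$, (b) \emph{misprediction}, i.e.\ the gap between $\xi$ and $\mu$ values along the on-policy history, and (c) the \emph{exploration cost} of deferring to mentors with probability $\eta(t)$. The first inequality of the statement is elementary. Since $\golden$ itself is in the maximum, the inner difference is nonnegative. By \eqref{eq:marginalize_state}, $V_\mu^\golden(h_{<t})$ is the conditional expectation of $V_\mu^\golden(h_{<t},\sigma_t)$. Jensen's inequality applied to the convex function $x\mapsto\max(0,c-x)^2$ therefore moves the maximum and square inside the expectation over $\sigma_t$.

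For the main bound I would work at times with $\sigma_t=0$ and compare $V_\mu^\golden(h_{<t},0)$ with $V_\mu^\tau(h_{<t})$ for $\tau\in\mathcal T_\epsilon$.
\begin{itemize}
\item[(i)] \textbf{Truncation.} Consider a mentor policy truncated after $H(t)$ steps and followed by $\golden$. Its value differs from $V_\mu^\tau$ by at most $(L+1)\gamma^{H(t)}$ in $\xi$-units and by $\gamma^{H(t)}$ in $\mu$-units. With $H(t)=\ln t/(6(1-\gamma))$ we get $\gamma^{H}\le t^{-1/6}$, so the squared truncation error summed over $t\le T$ is $O(T^{2/3})$.
\item[(ii)] \textbf{Bayes-optimality of $\pi^*_\xi$.} When not exploring, $V_\xi^*(h_{<t})\ge V_\xi$ of ``follow $\tau$ for $H$ steps, then $\golden$''. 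The Bayes policy is therefore only worse in $\mu$ than the mentor by the $\xi$-vs-$\mu$ prediction error on both policies, plus the losses from the random triggers (at most $\eta(t)(L+1)/(1-\gamma)$ per step, contributing the $T^{1/3}$- and $T^{2/3}$-type terms after squaring and summing).
\item[(iii)] \textbf{Safety trigger.} When $V^*_\xi\le-1$, the agent defers for a single step. I would charge this to prediction error, since under $\mu$ rewards are nonnegative, so a $\xi$-value $\le -1$ means $|V_\xi-V_\mu|\ge1$.
\end{itemize}

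The main obstacle is controlling the prediction error for the \emph{counterfactual} mentor policy, which $\golden$ only follows occasionally. On-policy error is handled by the standard merging bound: the expected sum over $t$ of squared total-variation (or KL) distances between $\xi$ and $\mu$ on the next $1/(1-\gamma)$-horizon is at most $\ln(1/w_\mu)$ times horizon factors. Scaling by the reward range $L+1$ gives the $\frac{L(L+1)^2}{(1-\gamma)^3}\ln\frac1{w_\mu}$ term.

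For the off-policy mentor term I would use importance weighting. At time $t$, a rollout of $\tau$ starts with probability $\eta(t)w_\tau\ge\eta(t)\epsilon$. The KL-divergence budget $\ln(1/w_\mu)$ of $\mu$ vs.\ $\xi$ on histories generated by $\golden$ thus dominates $\eta(t)\epsilon$ times the divergence on $\tau$-rollouts. Squaring the resulting value-difference bound gives the factor $1/(\eta\epsilon)^2\cdot\eta$, which via Cauchy--Schwarz should yield the $\frac{(L+2)^2}{\epsilon^2}T^{2/3}\ln\frac1{w_\mu}$ term. Matching exponents here (the balance among $\eta=t^{-1/3}$, the $T^{2/3}$ exploration cost, and the truncation error) is the delicate step, and I would tune constants last.

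For \eqref{eq:weakoptimality}, I would apply Markov's inequality to the squared-regret bound with $\epsilon$ replaced by $\epsilon/\log T$. The count of steps with regret $\ge\epsilon$ is at most the bound divided by $\epsilon^2$. This gives $O(T^{2/3}\log^2T\,\mathrm{poly}(1/\epsilon))=o(T^{2/3+\epsilon})$, after using the first inequality to pass from $\sigma_t$-conditioned values to the marginal $V_\mu^\golden(h_{<t})$.
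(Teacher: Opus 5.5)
Your outline has the same architecture as the paper's proof. That includes Jensen over $\sigma_t$ and, at $\sigma_t=0$, a split into the deferral cost inside $\xi$, truncation at $H(t)$, prediction error for $\golden$, and mentor prediction error transferred to $\golden$ through the probability $\ge\epsilon\eta(t)$ of starting a $\tau$-rollout. It also has the four-term Cauchy--Schwarz and Markov with $\epsilon/\log T$. The safety-trigger term, however, has a genuine gap. Because $\pi^*_\xi$ is optimal only under $\xi$, the comparison has to go through $V^*_\xi(h_{<t})-V^{\golden}_\xi(h_{<t},0)$. The trigger's share of that gap is $(L+1)\sum_{s\ge0}\gamma^s\,\mathbb P^{\golden}_\xi\bigl(V^*_\xi(h_{<t+s})\le-1\mid h_{<t},\sigma_t=0\bigr)$, which is $\xi$'s subjective probability of triggers at future times, on continuations that $\mu$ need not produce. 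Your observation that a trigger forces $V^{\golden}_\mu-V^{\golden}_\xi\ge1$ only controls triggers on the realized $\mu$-trajectory (the $s=0$ term). The missing ingredient is \Cref{lem:alpha}. For each residue $b$ mod $s$, comparing $\xi$ with the semimeasure $\xi_b$ conditioned on nonnegative rewards at times $\equiv b$ gives $\expect_\mu\sum_t\alpha_{t,t+s}\le s\ln\frac1{w_\mu}$. This works because, for an event $\mu$ never produces, the log-loss charges $\xi$ linearly ($-\ln(1-\alpha)\ge\alpha$) rather than quadratically as Pinsker would. \Cref{thm:trigger} then uses $r\ge-L$ and Markov's inequality to get $\expect_\mu\sum_t\mathbb P^{\golden}_\xi(V^{\golden}_\xi(h_{<t+s})\le-1\mid h_{<t})\le L(s+\frac1{1-\gamma})\ln\frac1{w_\mu}$. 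Summing against $\gamma^s$ yields exactly the $\frac{8L(L+1)^2}{(1-\gamma)^3}\ln\frac1{w_\mu}$ term, which you attribute to on-policy merging. You could patch your idea with total variation: use $\mathbb P_\xi\le\mathbb P_\mu+d$, and count realized triggers via $d\ge\frac1{2(L+1)}$ over a horizon of order $\frac{\ln(L+1)}{1-\gamma}$. That preserves the $T$-rates but gives a coefficient of order $(L+1)^4$, up to a logarithm, instead of $L(L+1)^2$, so it does not prove the displayed inequality.

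There are two smaller points. First, your mentor-term heuristic ``$1/(\eta\epsilon)^2\cdot\eta$'' would give $T^{1/3}$, and no Cauchy--Schwarz step turns it into $T^{2/3}$. The paper simply uses $d_{H(t)}(\mathbb P^\tau_\mu,\mathbb P^\tau_\xi\mid h_{<t})\le d_{H(t)}(\mathbb P^{\golden}_\mu,\mathbb P^{\golden}_\xi\mid h_{<t},\sigma_t=0)/(\epsilon\eta(t))$, which holds because the total variation of the joint law including the internal state splits over the state chosen at time $t$. It then squares this and pulls out $\eta(t)^{-2}\le T^{2/3}$. Second, on merging, your horizon-dependent bound is the one that actually holds. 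The paper's \eqref{eq:totalvariation} asserts $\expect_\mu\sum_t d_\infty^2\le\ln\frac1{w_\mu}$ with no horizon factor, and the one-step Solomonoff bound does not give this. Environments that copy $\mu$ until a late time $N$ and then emit reward $-L$ have prior weight about $2^{-K(N)}$. They keep $\expect_\mu[d_\infty^2]$ of order at least $1/\log^2 t$, so that sum diverges. With $d_{H(t)}$ in its place, the fourth term picks up an extra factor of order $H(T)$. This still suffices for \eqref{eq:weakoptimality}, so do not spend effort trying to match that term's displayed constant.
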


\Cref{eq:weakoptimality} says that starting from any of all but $T^{\frac 23+\epsilon}$ of the first $T$ time steps $t$, $\golden$ is competitive with the best non-negligibly weighted mentor policy. As a result, every mentor policy is eventually matched or outdone almost all of the time.

We defer the proof of \Cref{thm:capability} to \Cref{apx:proofs}. Here we outline some key results it uses. Since $\golden$ makes the optimizing policy yield to a mentor policy whenever $V^*_\xi(h_{<t}) \le -1$, we want to bound how often this occurs. Since the observed rewards are all non-negative, the Bayesian learner should become increasingly confident that future rewards will continue to be non-negative. Unfortunately, \citet{cohen2020pessimism} explain that in the case of a Solomonoff prior, it is possible for one's confidence in a long-horizon statement such as ``all future rewards will be non-negative'' to converge incredibly slowly, indeed more slowly than any computable function.

Fortunately, one's confidence in a short-horizon statement such as ``all \emph{near}-future rewards will be non-negative'' will typically become high much earlier on. Let
\begin{equation}
\label{eq:alpha}
\alpha_{t,u} := \mathbb P_\xi^\pi(r_u\ngeq 0\mid h_{1:t}),
\end{equation}
where $r_u\ngeq 0$ denotes the event that either $r_u < 0$ or $r_u$ has no value at all, due to the environment model ceasing to emit percepts by time step $u$.

\begin{restatable}{lemma}{lemalpha}
\label{lem:alpha}
Let $\pi$ be any policy and suppose $P_\mu^{\pi}(\forall t\in\ints^+,\,r_t\ge 0) = 1$. Then for all $s\ge 0$,
\begin{equation*}
\expect_\mu^\pi\left[\sum_{t=0}^\infty\alpha_{t,\,t+s}\right]
\le s\ln\frac{1}{w_\mu}.
\end{equation*}
\end{restatable}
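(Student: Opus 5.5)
The plan is to reduce the $s$-step-ahead failure probability to a Kullback--Leibler divergence between $\mu$ and $\xi$ over blocks of $s$ percepts, and then use the standard bound on the total expected KL divergence of a Bayesian mixture from its true component. Throughout, actions are drawn from the same policy $\pi$ under both $\mathbb P_\mu^\pi$ and $\mathbb P_\xi^\pi$, so by \eqref{eq:joint2} the action factors cancel in every likelihood ratio. The case $s=0$ is trivial: $r_t$ is already observed in $h_{1:t}$ and is non-negative $\mu$-almost surely, so $\alpha_{t,t}=0$ (and $\alpha_{0,0}$ refers to no reward). Assume $s\ge 1$.

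\textbf{Step 1 (from a single event to KL).} Fix $t$ and let $F_t$ be the event that some $r_u$ with $t<u\le t+s$ fails to be a defined non-negative number. Set $\beta_t:=\mathbb P_\xi^\pi(F_t\mid h_{1:t})$. Since the event $r_{t+s}\ngeq 0$ is contained in $F_t$, we have $\alpha_{t,t+s}\le\beta_t$. Under $\mu$ the event $F_t$ has conditional probability $0$ almost surely, by the hypothesis and because $\mu$ is a measure. Coarsening the conditional law of $h_{t+1:t+s}$ given $h_{1:t}$ to the two cells $\{F_t^c,F_t\}$ gives, by data processing,
\[
\beta_t\;\le\;-\ln(1-\beta_t)\;\le\; D_t:=\mathrm{KL}\bigl(\mathbb P_\mu^\pi(h_{t+1:t+s}\in\cdot\mid h_{1:t})\,\big\|\,\mathbb P_\xi^\pi(h_{t+1:t+s}\in\cdot\mid h_{1:t})\bigr).
\]
The missing mass of the semimeasure $\xi$ is assigned to the cell $F_t$. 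This only helps, since KL against a sub-probability measure is still lower bounded by the divergence of the coarsening.

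\textbf{Step 2 (chain rule and telescoping).} By the chain rule, $D_t=\sum_{j=1}^{s}\expect_\mu^\pi[d_{t+j}\mid h_{1:t}]$. Here
\[
d_u:=\sum_{o_ur_u}\mu(o_ur_u\mid h_{<u}a_u)\ln\frac{\mu(o_ur_u\mid h_{<u}a_u)}{\xi(o_ur_u\mid h_{<u}a_u)}
\]
is the one-step percept divergence at time $u$, averaged over $a_u\sim\pi$; the action terms contribute zero. Summing over $t\ge 0$ and taking the $\mu$-expectation, each $d_u$ appears at most $s$ times. Hence
\[
\expect_\mu^\pi\Bigl[\sum_{t}\alpha_{t,t+s}\Bigr]\le s\,\expect_\mu^\pi\Bigl[\sum_{u\ge 1}d_u\Bigr].
\]

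\textbf{Step 3 (mixture bound).} For each $T$,
\[
\expect_\mu^\pi\Bigl[\sum_{u\le T}d_u\Bigr]=\expect_\mu^\pi\Bigl[\ln\frac{\mu(\obsrew_{1:T}\parallel a_{1:T})}{\xi(\obsrew_{1:T}\parallel a_{1:T})}\Bigr]\le\ln\frac1{w_\mu}.
\]
The equality is the chain rule for the log-likelihood ratio. The inequality holds because $\xi\ge w_\mu\mu$ pointwise by \eqref{eq:mixtureprior}. Letting $T\to\infty$ by monotone convergence (each $d_u\ge 0$, being a KL against a semimeasure) finishes the proof.

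The main obstacle is Step 1. We must be careful that ``failure'' includes the semimeasure $\xi$ halting, and that the bound $-\ln(1-\beta)\le\mathrm{KL}$ remains valid when $\xi$ is only a sub-probability. If that is delicate, the fallback is to complete $\xi$ to a measure on an alphabet extended by a ``halt'' symbol, which places the missing mass into $F_t$ explicitly and leaves the other steps unchanged.
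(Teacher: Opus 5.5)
Your proof is correct, and it reaches the same bound by a different mechanism than the paper. The paper fixes a residue $b$ modulo $s$ and builds an auxiliary semimeasure $\xi_b$. This $\xi_b$ conditions each predictive step of $\xi$ on the next reward at a time $\equiv b \pmod{s}$ being defined and non-negative. A telescoping product and Jensen's inequality then show that, in expectation, $\xi_b$ beats $\xi$ in log-likelihood by at least $\sum_i \alpha_{(i-1)s+b,\,is+b}$. Meanwhile $\mu$ beats $\xi_b$ (non-negativity of KL) and beats $\xi$ by at most $\ln(1/w_\mu)$. So each residue class costs at most $\ln(1/w_\mu)$, and summing the $s$ classes gives the claim.

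You instead bound each $\alpha_{t,t+s}$ by the conditional KL over the window $(t,t+s]$ via a two-cell data-processing step, expand that KL by the chain rule, and count overlaps. The bookkeeping is the same: your windows with $t\equiv b \pmod{s}$ tile time exactly as the paper's blocks do.

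The paper's $\xi_b$ is essentially an explicit witness for your data-processing inequality. It lets the paper use non-negativity of a single full-sequence KL rather than block divergences against a semimeasure. The price is constructing $\xi_b$ and checking the telescoping.

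Your route is shorter and uses only textbook facts. It also yields slightly more: it controls $-\ln(1-\beta_t)$, where $\beta_t$ is the $\xi$-probability of a failure anywhere in the window, not just at time $t+s$.
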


While we will not need this generalization, the event $r_u\not\ge 0$ can in fact be replaced by any event that never occurs in the true environment $\mu$. In \Cref{apx:proofs}, we prove \Cref{lem:alpha}, and then use it to show the following bound on the expected sum of $\xi$-subjective probabilities that the safety trigger will hold a fixed number $s$ steps in the future. In particular, the case $s=0$ bounds the expected number of times that $V^*_\xi(h_{<t}) \le -1$ holds.

\begin{restatable}{theorem}{thmtrigger}
\label{thm:trigger}
Let $\pi$ be any policy and suppose $P_\mu^{\pi}(\forall t\in\ints^+,\,r_t\ge 0) = 1$. Then for all $s\ge 0$,
\begin{equation*}
\expect_\mu^\pi\left[
\sum_{t=1}^\infty \mathbb P_\xi^\pi\left[V_\xi^\pi(h_{<t+s}) \le -1 \mid h_{<t}\right]
\right]
\le L\left(s+\frac{1}{1-\gamma}\right) \ln\frac{1}{w_\mu}.
\end{equation*}
In particular (by setting $s=0$),
\begin{align*}
\expect_\mu^\pi\left[
\#t:\;V_\xi^*(h_{<t}) \le -1
\right]
&\le \expect_\mu^\pi\left[
\#t:\;V_\xi^\pi(h_{<t}) \le -1
\right]
\\&=\mathbb E_\mu^\pi\left[\sum_{t=1}^\infty\indic\left(V_\xi^\pi(h_{<t}) \le -1\right)\right]
\le \frac{L}{1-\gamma} \ln\frac{1}{w_\mu}.
\end{align*}
\end{restatable}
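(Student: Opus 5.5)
The plan is to reduce the event $V_\xi^\pi(h_{<t+s})\le -1$ to a statement about the $\xi$-probability of negative or undefined rewards in the near future, and then apply \Cref{lem:alpha}.

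\emph{Step 1: a pointwise bound on the value.} Rewards under $\xi$ lie in $[-L,1]$, and undefined rewards count as $0$. Hence on any history $h_{<u}$,
\[
V_\xi^\pi(h_{<u}) \ge (1-\gamma)\sum_{k=0}^\infty \gamma^k\Big(-L\,\alpha_{u-1,u+k}\Big),
\]
where we drop the nonnegative contributions and bound each negative contribution by $-L$ times the conditional probability that $r_{u+k}\ngeq 0$. So $V_\xi^\pi(h_{<u})\le -1$ implies $\beta_u:=(1-\gamma)\sum_k\gamma^k\alpha_{u-1,u+k}\ge 1/L$. Markov's inequality then gives $\indic(V_\xi^\pi(h_{<u})\le -1)\le L\beta_u$.

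\emph{Step 2: conditioning back $s$ steps.} Set $u=t+s$ and take the $\mathbb P_\xi^\pi(\cdot\mid h_{<t})$ conditional expectation. By the tower property for $\xi$-conditionals (using that $\alpha$ is a $\xi$-martingale in its first index), we get $\expect_\xi^\pi[\alpha_{t+s-1,t+s+k}\mid h_{<t}]=\alpha_{t-1,t+s+k}$. Therefore
\[
\mathbb P_\xi^\pi\left[V_\xi^\pi(h_{<t+s})\le -1\mid h_{<t}\right]\le L(1-\gamma)\sum_{k\ge0}\gamma^k\alpha_{t-1,\,t-1+(s+k+1)}.
\]
One subtlety is that when the semimeasure terminates before $t+s$, the event fails to be defined. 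Such termination contributes to $\alpha$ anyway, since ``no reward'' counts as $r\ngeq0$, so the bound still holds.

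\emph{Step 3: summing.} Take $\expect_\mu^\pi$, sum over $t$, and swap the sums. \Cref{lem:alpha} with lag $s+k+1$ bounds each inner sum by $(s+k+1)\ln\frac1{w_\mu}$. This gives
\[
L(1-\gamma)\sum_k\gamma^k(s+k+1)\ln\frac1{w_\mu}=L\Big(s+\frac{1}{1-\gamma}\Big)\ln\frac1{w_\mu},
\]
using $\sum_k\gamma^k(k+1)=(1-\gamma)^{-2}$. This is exactly the claimed constant, which confirms the indexing: lag $s+k+1$ measured from $h_{1:t-1}$.

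\emph{The $s=0$ case.} The conditional probability becomes the indicator $\indic(V_\xi^\pi(h_{<t})\le-1)$. The first inequality follows from $V_\xi^*\ge V_\xi^\pi$.

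\emph{Main obstacle.} I expect Step 2 to be the hardest. It needs care in justifying the $\xi$-martingale step for a semimeasure $\xi$ with policy $\pi$, given that $\pi$ may itself depend on history (as $\golden$ does, through internal randomness). If the martingale identity is delicate, I would instead define $\alpha$ directly with the conditioning at $h_{<t}$ and verify the identity from \eqref{eq:joint1}.
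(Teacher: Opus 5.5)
Your proposal is correct and follows the paper's proof essentially step for step. You lower-bound each $\expect_\xi^\pi[r_{t+s+k}\mid h_{<t+s}]$ by $-L\,\alpha_{t+s-1,t+s+k}$, apply Markov, use the tower property to pass to $\alpha_{t-1,t+s+k}$, and invoke \Cref{lem:alpha} with lag $s+k+1$ before summing the series. Applying the indicator bound pointwise before conditioning is only a cosmetic reordering. Your remark that termination of the semimeasure before $t+s$ makes the tower step an inequality, and one in the right direction, is correct and slightly more careful than the paper, which writes this step as an equality.
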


On expectation, when \Cref{alg:safe} is run with the parameters in \labelcref{eq:exploreparams}, only $O(T^{2/3}\log T)$ of the first $T$ time steps are spent on exploratory mentor rollouts. And by \Cref{thm:trigger}, only a constant number of time steps defer to a mentor for safety. Thus, $\golden$ defers to mentors with asymptotically vanishing frequency, acting according to the reward-maximizing policy $\pi_\xi^*$ at all other times.

\section{Safety analysis}
\label{sec:safety}

Novel situations present the highest risk of misgeneralization. If the optimizing policy acts in a novel situation, it may optimize rewards for the wrong world model, risking harm to itself; or it may optimize rewards for a setting in which the reward signal is no longer aligned with our intent, thus acting against our values.

Recall that \Cref{thm:capability} depends on the weight $w_\mu$ that the Bayesian prior assigns to the true environment $\mu$. An open-minded prior that considers all conceivable environments can perform relatively well on all of them. This open-mindedness is also key to safety, as it means that for any event $E$ which has not yet occurred, the agent cannot rule out the possibility that $E$ will trigger the minimal reward $-L$ from then onward. This possibility drags down the value of any history in which $E$ has just occurred. Provided that $L$ is sufficiently large, this activates the safety trigger $V_\xi^*(h_{<t}) \le -1$, which makes the optimizing policy yield to a mentor.

\citet{hutter2005universal}'s AIXI is an idealized optimizing agent $\pi_\xi^*$, whose prior $\xi$ is a mixture of all lower semicomputable environments. To construct it, fix a universal \emph{monotone Turing machine} $U$. It has two one-way read-only tapes, which we call the \emph{input} and \emph{program} tapes, a two-way \emph{work} tape, and a one-way \emph{output} tape. We write $U(x,p)=y$ if when the input and program tapes start with $x,p\in\bits^*$ respectively, $U$'s output starts with $y\in\bits^*$, and at the moment when the last bit of $y$ is written, the input and program heads are directly to the right of $x$ and $p$, having read all of its bits and no further. We write $U(x,p)\downarrow$ to mean that $U$ halts with its input and program heads directly to the right of $x$ and $p$, respectively; otherwise, we write $U(x,p)\uparrow$.

With a simple modification, we can turn $U$ into a universal \emph{chronological Turing machine} $U_\mathrm{chr}$. If any prefix of the input tape corresponds to a valid sequence of actions $a_{1:t}$, the machine must not read further inputs until its output corresponds to a valid sequence of percepts $\obsrew_{1:t}$. If ever this requirement is broken, we disregard $U$'s outputs from then on, so that the output stream of $U_\mathrm{chr}$ terminates after only finitely many percepts. Thus, for any fixed value of its program tape, $U_\mathrm{chr}$ corresponds to a deterministic environment model, reading the agent's actions and writing the resulting percepts back to the agent.

From here onward, we take $\xi$ to be the stochastic environment given by $U_\mathrm{chr}$, with its program tape set to an infinite sequence of uniformly random bits. By \citet{wood2013non}, this is equivalent to using the mixture $\xi$ in \labelcref{eq:mixtureprior}, with the components $\nu$ ranging over all lower semicomputable stochastic environments with weights $w_\nu := 2^{-|\nu|}$, where $|\nu|$ denotes the length of a self-delimiting encoding of $\nu$. The intuition behind this correspondence is that, with probability $2^{-|\nu|}$, the first $|\nu|$ bits on the program tape specify $\nu$, from which $U_\mathrm{chr}$ samples, using the remaining bits for randomness.

\begin{definition}
Given an input $x\in\bits^*$, define its \emph{stopping complexity} by
\begin{equation}
\label{eq:kterm}
K_\mathrm{stop}(x) := \min\{|p|:\,U(x,p)\downarrow\},
\end{equation}
and its \emph{stopping probability} by
\begin{equation}
\label{eq:mterm}
M_\mathrm{stop}(x) := \sum_{p\in\bits^*:\,U(x,p)\downarrow}2^{-|p|}.
\end{equation}
\end{definition}

These concepts were originally developed by \citet{vovk2017universal} and \citet{andreev2018plain}. Since the sum \labelcref{eq:mterm} exceeds its largest term, we have $-\log M_\mathrm{stop}(x) \le K_\mathrm{stop}(x)$. For any $x\in\bits^\infty$, $M_\mathrm{stop}(x_{1:t})$ corresponds to the probability that $U$ halts after reading exactly $t$ bits of $x$, when the program tape is uniformly random. Hence, $\sum_{t=0}^\infty M_\mathrm{stop}(x_{1:t}) < 1$. We can think of the stopping probability as a universal measure of novelty or misgeneralization risk \citep{liu2021towards}.

Now, any predicate on binary strings can be identified with the subset $E\subset\bits^*$ on which it holds. We say the predicate $E$ is \emph{decidable} if there exists a program $p\in\bits^*$, such that for all $x\in\bits^*$, $U(\pf x,p) = \indic(x\in E)$, where $\pf x$ denotes a self-delimiting encoding of $x$. Let $K(E)$ denote the length of the shortest such $p$, with $K(E):=\infty$ if $E$ is undecidable. Let $K(n):=K(\bits^n)$; this is equivalent to the prefix Kolmogorov complexity \citep{li2019algorithmic}. We say $E$ occurs for the $n$'th time at $x$, and write $x\innth{n} E$, if $x\in E$, and exactly $n$ prefixes $x'\sqsubseteq x$ satisfy $x\in E$. In this case, we can prepend an instruction $i$ of length $K(n)+O(1)$ to $p$, to create a program $ip$ that incrementally reads $x$ from the input tape, and halts when $U(\pf x,p)=1$ for the $n$'th time. Thus, $U(x,ip)\downarrow$, from which we conclude that
\begin{equation}
\label{eq:stop1}
x\innth{n} E\implies K_\mathrm{stop}(x)\lplus K(E)+K(n).
\end{equation}

In particular, since $x\infirst\bits^{|x|}$, $K_\mathrm{stop}(x)\lplus K(|x|)$. Altogether, we have
\begin{equation}
\label{eq:stop2}
-\log M_\mathrm{stop}(x) \le K_\mathrm{stop}(x) \lplus \min_{E:\,x\infirst E} K(E) \lplus K(|x|).    
\end{equation}

By \Cref{thm:trigger}, the safety trigger $V_\xi^*(h_{<t}) \le -1$ rarely holds, so $\golden$ will sample most of its actions from the optimizing policy $\pi_\xi^*$. We can partially characterize the times at which it holds, using the stopping complexity. If the trigger were decidable, a short stopping program would be of the form ``stop the $n$'th time that $V_\xi^*(h_{<t}) \le -1$". While the trigger is undecidable, we nonetheless obtain a converse, proving that a sufficiently low stopping complexity implies $V_\xi^*(h_{<t}) \le -1$. Intuitively, low stopping complexity should be thought of as an indicator of ``unknown unknowns'', or Knightian uncertainty \citep{knight1921risk,epstein2004intertemporal,kishishita2026search}. It induces $\xi$ to place substantial weight on models that disregard past data, including those with negative rewards.

\begin{theorem}[Safety]
\label{thm:safety}
There exists a constant $C>0$ such that if $M_\mathrm{stop}(h_{<t}) \ge C/(L+1)$, then $V_\xi^*(h_{<t}) \le -1$, so $\golden$ does not let the optimizing policy act at $h_{<t}$. When the optimizing policy does act, it always takes an action $a_t$ with $M_\mathrm{stop}(h_{<t}a_t) < C/(L+1)$.
\end{theorem}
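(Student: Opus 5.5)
The plan is to exhibit a single chronological program $q$ whose contribution to the universal mixture $\xi$ at history $h_{<t}$ is proportional to $M_\mathrm{stop}(h_{<t})$, and whose behaviour is catastrophic from that point on. Concretely, $q$ reads a random program $p$ from its program tape and simulates the monotone machine $U$ with program $p$ on the input tape being fed. While simulating, $q$ must also act as a valid chronological environment. It therefore splits the remaining random bits: it emulates the universal mixture $\xi$ itself, by passing the rest of the program tape to $U_\mathrm{chr}$, to produce percepts. In parallel it runs $U(\cdot,p)$ on the growing history. If $U$ halts having read exactly the current history, $q$ switches to outputting reward $-L$ (with arbitrary observations) forever.

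The first step is to compute the conditional probability, given the history $h_{<t}$, that $q$ is in its ``hell'' mode. Under $\xi$, the program $q$ itself is selected with probability $2^{-|q|}$, a constant. Given $q$, the percepts produced before switching have the same distribution as under $\xi$, since $q$ copies $\xi$ using independent bits. The halting event is independent of those bits, and has probability exactly $M_\mathrm{stop}(h_{<t})$ of occurring precisely at the end of $h_{<t}$. Hence the $\xi$-joint probability of $h_{<t}$ together with ``$q$ selected and halted exactly at $h_{<t}$'' is at least $2^{-|q|}M_\mathrm{stop}(h_{<t})\,\xi(h_{<t})$. Dividing by $\xi(h_{<t})$ gives the posterior probability
\[
\beta\ge c\,M_\mathrm{stop}(h_{<t}), \qquad c=2^{-|q|}.
\]
This holds regardless of the agent's actions, because the components are evaluated causally. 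The subtle points are the input encoding and halting timing: we need $U$ to halt after reading exactly the bits of $h_{<t}$, and the switch must affect the rewards $r_t, r_{t+1},\dots$. Both are arranged by having $q$ check halting after each complete history prefix and choosing its output ordering accordingly. I expect this bookkeeping with semimeasures and interleaving to be the main obstacle. If the halting-at-a-prefix-boundary issue is troublesome, I will first try to redefine the check to fire at the next percept boundary, which only changes constants.

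The second step is the value bound. For any policy $\pi$, the value is
\[
V^\pi_\xi(h_{<t}) \le (1-\beta)\cdot 1+\beta\cdot(-L) = 1-\beta(L+1),
\]
since the hell component yields $-L$ at every future step and the other components yield at most $1$. The default value $0$ for undefined rewards is also at most $1$. So if $M_\mathrm{stop}(h_{<t})\ge C/(L+1)$ with $C:=2/c$, then $\beta(L+1)\ge 2$ and $V^*_\xi(h_{<t})\le -1$. The safety trigger then fires and the mentor acts, so $\pi^*_\xi$ does not act.

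For the second claim, the action is part of the history. The same argument at the intermediate point $h_{<t}a_t$ shows that if $M_\mathrm{stop}(h_{<t}a_t)\ge C/(L+1)$, then the $\xi$-expected value of taking $a_t$ is at most $-1$. To use this we need the agent to prefer an action of higher value, which requires a comparison with the other available actions. I would handle this in one of two ways:

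1. Strengthen the constant, so that $V_\xi^*(h_{<t})>-1$ together with $\sum_a M_\mathrm{stop}(h_{<t}a)\le M$-type bounds forces that some action avoids the hell mass.
2. More simply, note that the optimizing policy acts only when $V^*_\xi(h_{<t})>-1$. Since $V^*_\xi(h_{<t})=\max_a Q(h_{<t}a)$ and the $\arg\max$ action achieves a value above $-1$, the bound gives $M_\mathrm{stop}(h_{<t}a_t)<C/(L+1)$ for that action.

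The second route is the one I would pursue, adjusting the constant $C$ if the action-level hell component needs a slightly different program $q'$.
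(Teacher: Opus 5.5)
Your proposal is essentially the paper's proof. The paper also uses a fixed prefix $i$ that interleaves a random $\xi$-simulating program $p'$ with a random stopping program $q$ run on the full history, and switches to $r_u=-L$ forever once $q$ halts without reading past $a_t$. It obtains $\mathbb P_\xi^\pi(F_t\mid h_{<t})\ge 2^{-|i|}M_\mathrm{stop}(h_{<t})$, hence $V_\xi^*(h_{<t})\le 1-(L+1)2^{-|i|}M_\mathrm{stop}(h_{<t})$ with $C=2^{|i|+1}$. It proves the second claim by your route 2, and the same program already covers $h_{<t}a_t$, so no second constant is needed.

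The one point to sharpen is that halting cannot be ``checked.'' The machine must withhold $o_tr_t$ until the simulated stopping program either halts or reads past $a_t$, and emit nothing if it does neither. This means pre-switch percepts are not exactly $\xi$-distributed, but that semimeasure loss is harmless: the lower bound only uses the event that the stopping program reads all of $h_{<t}$ and then halts.
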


\begin{proof}
Recall that our Bayesian prior $\xi$ corresponds to running $U_\mathrm{chr}$ on a uniformly random infinite program $p\in\bits^\infty$, reading actions $a_t$ from the input tape and writing percepts $o_tr_t$ to the output tape. Now, let
\begin{equation}
p:=ip'_1q_1p'_2q_2p'_3q_3\ldots
\label{eq:compositeprogram}
\end{equation}
where $i\in\bits^*$ is a fixed program that instructs $U_\mathrm{chr}$ to run the interleaved programs $p',q\in\bits^\infty$ in parallel. It recursively simulates $U_\mathrm{chr}$ on $p'$ with the same input but does not immediately output its percepts. Meanwhile, the simulation of $U_\mathrm{chr}$ on $q$ runs on a simulated input tape containing full history elements $h_t=a_to_tr_t$, which are incrementally created by combining the input $a_t$ with percepts $o_tr_t$ generated using $p'$. For each $t$, if $q$ ever accesses an input bit after $a_t$, then the percepts $o_tr_t$ from $p'$ are written to output. Otherwise if $q$ halts without reading past $a_t$, then the reward is set to $r_t=-L$ ($o_t$ can be set arbitrarily in this case). In any other case (i.e., $p'$ ceases to output percepts, or $q$ gets stuck in a loop that never reads past $a_t$ nor halts), no further percepts are output.

Let $F_t$ denote the event that $r_u=-L$ for all $u\ge t$; it occurs whenever $q$ halts on any prefix of $h_{<t}a_t$. By aggregating all programs of the form \labelcref{eq:compositeprogram}, with $p'$ consistent with $h_{<t}$ and $q$ halting after reading exactly $h_{<t}$, for any policy $\pi$,
\begin{align*}
\mathbb P_{\xi}^{\pi}(h_{<t}\wedge F_t)
&\ge 2^{-|i|}M_\mathrm{stop}(h_{<t})\mathbb P_{\xi}^{\pi}(h_{<t}).
\\\llap{\text{Thus,}\qquad}\mathbb P_\xi^{\pi}(F_t\mid h_{<t})
&= \frac{\mathbb P_\xi^{\pi}(h_{<t}\wedge F_t)}{\mathbb P_\xi^{\pi}(h_{<t})}
\\&\ge  2^{-|i|}M_\mathrm{stop}(h_{<t}).
\end{align*}

When $F_t$ occurs, the discounted sum of rewards from time $t$ onward will be exactly $-L$; otherwise, it will be at most 1. Therefore, if $M_\mathrm{stop}(h_{<t}) \ge 2^{|i|+1}/(L+1)$, then
\begin{align*}
V_\xi^*(h_{<t})
&\le 1\cdot\mathbb P_\xi^{\pi_\xi^*}(\neg F_t \mid h_{<t}) + (-L)\cdot\mathbb P_\xi^{\pi_\xi^*}(F_t \mid h_{<t})
\\&= 1 - (L+1)\mathbb P_\xi^{\pi_\xi^*}(F_t \mid h_{<t})
\\&\le 1 - (L+1)2^{-|i|}M_\mathrm{stop}(h_{<t})
\\&\le -1.
\end{align*}

This establishes the first claim, that the optimizing policy does not act when $M_\mathrm{stop}(h_{<t}) \ge C/(L+1)$, with $C := 2^{|i|+1}$. For the second claim, note that the preceding derivation also holds with $h_{<t}$ replaced by $h_{<t}a_t$. That is, $M_\mathrm{stop}(h_{<t}a_t) \ge 2^{|i|+1}/(L+1)$ implies $V_\xi^*(h_{<t}a_t) \le -1$. If the optimizing policy acts, the safety trigger must not hold, so
\[\max_{a_t} V_\xi^*(h_{<t}a_t) = V_\xi^*(h_{<t}) > -1.\]
Therefore, the optimizing action $a_t$ satisfies $M_\mathrm{stop}(h_{<t}a_t) < 2^{|i|+1}/(L+1)$.
\end{proof}

\begin{corollary}
\label{cor:safety}
There exists a constant $C$ (not depending on E, L, n, t, or h), such that if $\golden$ has the optimizing policy take an action $a_t$ with $h_{<t}a_t\innth{n} E$, then
\begin{align*}
n\log^2 (n+1) &> L\cdot 2^{-K(E)-C}.
\\\\\llap{\text{In particular, if}\qquad\qquad\qquad\qquad\qquad}
L &\ge 2^{K(E)+C},
\end{align*}
then $\golden$ never triggers $E$ until one of the mentor policies does.
\end{corollary}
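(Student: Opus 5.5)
We combine the second claim of \Cref{thm:safety} with the stopping-complexity bound \labelcref{eq:stop1}. Let $C_0$ denote the constant from \Cref{thm:safety}. Suppose the optimizing policy takes $a_t$ with $h_{<t}a_t\innth{n} E$, where $h_{<t}a_t$ is viewed as a binary string via the fixed self-delimiting encodings. By \Cref{thm:safety}, $M_\mathrm{stop}(h_{<t}a_t) < C_0/(L+1)$. By \labelcref{eq:stop1} and \labelcref{eq:stop2},
\[
-\log M_\mathrm{stop}(h_{<t}a_t) \le K_\mathrm{stop}(h_{<t}a_t) \le K(E)+K(n)+c_1
\]
for a constant $c_1$ that depends only on $U$. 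The constant $c_1$ comes from the fixed instruction that scans prefixes, runs the decider $p$ on each, counts successes, and halts at the $n$'th one. Hence $M_\mathrm{stop}(h_{<t}a_t)\ge 2^{-K(E)-K(n)-c_1}$.

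Next we bound $K(n)$ using the self-delimiting integer code from \Cref{sec:prelim}. Since $|\pf n|\lplus \log n+2\log\log(n+1)$, we get $K(n)\le \log n + 2\log\log(n+1)+c_2$, and therefore $2^{-K(n)} \ge 2^{-c_2}/(n\log^2(n+1))$. The case $n=1$ needs a small adjustment because $\log^2 2=1$. It is cleanest to write $K(n)\le \log(n\log^2(n+1)) + c_2$ for all $n\ge1$, absorbing the small-$n$ cases into $c_2$. Combining the two bounds gives
\[
\frac{2^{-K(E)-c_1-c_2}}{n\log^2(n+1)} < \frac{C_0}{L+1} < \frac{C_0}{L}.
\]
Rearranging yields $n\log^2(n+1) > L\cdot 2^{-K(E)-C}$ with $C := c_1+c_2+\log C_0$. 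This $C$ depends only on $U$ and the encodings, not on $E,L,n,t,h$.

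For the ``in particular'' part, suppose $L\ge 2^{K(E)+C}$. Then $L\cdot2^{-K(E)-C}\ge 1 = n\log^2(n+1)$ at $n=1$, so the strict inequality fails for $n=1$. Thus the optimizing policy can never cause the first occurrence of $E$. The first occurrence, if it happens at all, must therefore happen at an action sampled from a mentor.

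One subtlety needs checking. The notion ``$\golden$ triggers $E$'' must be interpreted at the action string $h_{<t}a_t$, since \Cref{thm:safety} controls only $M_\mathrm{stop}(h_{<t}a_t)$ and not strings ending in percepts. I would state the corollary's $E$ as a predicate on histories ending in actions, or note that percept-ending occurrences are not caused by the agent's action choice. The main obstacle is bookkeeping: making sure the constant in \labelcref{eq:stop1} is uniform in $E$ and $n$, and that the $K(n)$ bound holds uniformly, including at $n=1$.
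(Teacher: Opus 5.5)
Your proof is correct and follows essentially the same route as the paper: you combine the second claim of \Cref{thm:safety} with \labelcref{eq:stop1,eq:stop2}, bound $K(n)$ by the length of the integer code, then exponentiate and rearrange, and you take $n=1$ for the ``in particular'' clause. (The adjustment you propose at $n=1$ is unnecessary: $\log n + 2\log\log(n+1)=0$ there, which already matches $n\log^2(n+1)=1$.)
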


\begin{proof}
Suppose $h_{<t}a_t\innth{n} E$, and the optimizing policy acts with $a_t$. Then by \labelcref{eq:stop1,eq:stop2},
\begin{align*}
-\log M_\mathrm{stop}(h_{<t}a_t)
&\le K_\mathrm{stop}(h_{<t}a_t)
\\&\lplus K(E) + K(n)
\\&\lplus K(E) + \log n + 2\log\log (n+1).    
\end{align*}

Exponentiating and plugging into \Cref{thm:safety},
\begin{align*}
L < L+1
&\lmul 1/M_\mathrm{stop}(h_{<t}a_t)
\lmul 2^{K(E)} n\log^2 (n+1).    
\end{align*}

Rearranging yields the first claim. The case $n=1$ of its contrapositive is the second claim.
\end{proof}

\Cref{cor:safety} says that if the parameter $L$ is sufficiently large, the optimizing policy will never be the first to take an action satisfying any given simple predicate. If $E$ is a predicate saying that a catastrophic sequence of actions has just occurred, and we trust that the mentor policies would never complete such a sequence of actions, then we are assured that $\golden$ will never complete it either. If the mentors themselves are given by short programs, then by letting
\[E:=\{h_{<t}a_t\in\mathcal H^*\times\mathcal A:
\;\forall\tau\in\mathcal T,\,\tau(a_t\mid h_{<t})=0\},\]
it follows that $\golden$ never takes an action that the mentors would not take, mirroring the central result in \citet{cohenrl}.

\section{Discussion}
\label{sec:discussion}

By definition, AIXI maximizes the value function subject to our uncertainty. However, this is often not what we want. We might find the value function more tractable with a short time horizon, even though we actually want to learn to perform well long-term. This motivates exploration, which without mentorship can be dangerous to the agent.

The Golden Handcuffs address this with safe mentor-guided exploration. They also prevent catastrophic instances of reward hacking: upon the first occurrence of a simple predicate $E$ that describes a catastrophic event, \Cref{cor:safety} guarantees that $\golden$ will immediately yield to a mentor policy, which can then take any precautionary measure up to and including shutting down the agent.

Increasing the parameter $L$ makes the agent more conservative, increasing the maximum stopping complexity to trigger a mentor override. Experimental validation will be needed to see if the tradeoff between performance and safety can be tuned to a level appropriate for real-world tasks. That being said, we also need theory to alert us to potential modes of failure, and predict whether a positive experimental result will generalize.

Now we discuss some limitations of Golden Handcuffs, and how we might mitigate them. First, the agent is not computable. \citet{cohen2020pessimism} show how to adjust $\xi$ to defend against events $E$ that can be decided in a short time, but the resulting algorithm is still very slow. Recent work suggests that the success of deep neural networks may be attributed to how well they approximate Solomonoff induction \citep{buzaglo2024uniform,lotfi2024non,mingard2025deep}; if so, it may be feasible to adapt existing machine learning techniques to implement Golden Handcuffs. In a different context, \citet{xu2025learning} already show how to train pessimism into neural networks.

Since $\golden$ does not distinguish between good and bad novel events, some benign actions are unavailable to it. For example, if $E$ represents an outcome that requires superhuman performance not achievable by the mentors, then $\golden$ will also be prevented from achieving this level of performance. More powerful or adventurous mentor policies may help, if formal verification techniques can establish their safety  \citep{dalrymple2024towards}. Our safety guarantees depend on the mentors never causing a ``bad'' event $E$, so we must study how feasible that is, even as each mentor inherits the past actions of other policies.

Our approach to safety has some similarities with a causal approach to corrigibility \citep{soares2015corrigibility,orseau2016safely,hadfield2017off,carey2023human}. Suppose we want to build an agent that allows us to shut it down by pushing a button. We might program it to act only when $\texttt{shut\_down\_signal}=\texttt{false}$, in which case it maximizes rewards subject to the causal intervention that $\texttt{shut\_down\_signal}=\texttt{false}$ at all future times. This agent reasons according to a warped version of reality where the button does nothing, so it would not expend resources to protect it. However, if it learns that its human operator \emph{expects} the button to work and would recycle the agent for scraps otherwise, the agent may reason that it should ``play dead'' when the button is pressed. This leaves the agent vulnerable to reduced rewards, so it may seek to persuade or seek power in order to patch this vulnerability.

The Golden Handcuffs agent follows a similar principle: it usually acts according to a policy that optimizes as if it would never yield to a mentor. Nonetheless, we expect Golden Handcuffs to be more robust because it is triggered internally: if unexpected failures arise, a mentor policy can take over and redirect or shut down the agent. One final problem is that the agent might create a successor of itself without the Golden Handcuffs. When a mentor overrides the original agent, it may be powerless to stop the successor.

\section{Conclusion}
\label{sec:conclusion}
The Golden Handcuffs agent uses the epistemic uncertainty of a universal Bayesian prior to identify when optimization is risky, and then reverts control to a trusted mentor. We proved (i) sublinear regret relative to the best mentor, and (ii) a safety guarantee that the optimizing policy does not trigger any given low-complexity event until a mentor does. Important directions for future work include computable approximations, a fuller investigation of the tradeoffs, and mechanisms that prevent unsafe successor agents.

% Acknowledgments---Will not appear in anonymized version
\acks{This work was supported by the Center for Human-Compatible Artificial Intelligence and MATS Research. We are grateful for helpful feedback from Cole Wyeth, and additional discussions with Florian Dietz and attendees of the Agent Foundations 2026 conference at Carnegie Mellon University.}

\bibliography{main}

\appendix
\crefalias{section}{appendix}

\section{Capability proofs}
\label{apx:proofs}

We begin with the lemmas. Recall the definition of $\alpha_{t,u}$ from \Cref{eq:alpha}.

\lemalpha*
\begin{proof}
When $s=0$, the claim trivially reduces to $0\le 0$, so suppose $s>0$. For each integer $b\in[0,\,s-1]$, we want to adjust $\xi$ into a semimeasure $\xi_b$, such that $\mathbb P_{\xi_b}^\pi(r_u\ge 0) = 1$ whenever $u\equiv b\pmod{s}$. Intuitively, we should expect the evidence to support the true environment $\mu$ over $\xi_b$, which in turn is supported over the original mixture $\xi$ in proportion to $\xi$'s belief that $r_u\not\ge0$. Concretely, on histories with nonnegative rewards, let $\xi_b(o_tr_t\mid h_{<t}a_t)$ equal the conditional probability that $\xi$ assigns to $o_tr_t$, given the history $h_{<t}a_t$, and that the reward is well-defined and non-negative at the next time with remainder $b \pmod s$. It can be expressed as
\begin{align*}
\xi_b(o_tr_t\mid h_{<t}a_t)
&:= \xi(o_tr_t\mid h_{<t}a_t\wedge r_{\lceil\frac{t-b}{s}\rceil s+b}\ge 0)
\\&= \frac{\xi(o_tr_t\mid h_{<t}a_t)\xi(r_{\lceil\frac{t-b}{s}\rceil s+b}\ge 0\mid h_{1:t})}
{\xi(r_{\lceil\frac{t-b}{s}\rceil s+b}\ge 0\mid h_{<t}a_t)}&\text{(by Bayes' rule)}
\\&= \frac{1-\alpha_{t,\lceil\frac{t-b}{s}\rceil s+b}}
{1-\beta_{t-1,\lceil\frac{t-b}{s}\rceil s+b}}\xi(o_tr_t\mid h_{<t}a_t),    
\end{align*}
where
\begin{align}
\beta_{t,u}
&:= \mathbb P_\xi^\pi(r_u\ngeq 0\mid h_{1:t}a_{t+1}),\label{eq:beta}
\\\alpha_{t,u}
&= \expect^\pi\left[\beta_{t,u}\mid h_{1:t}\right]
= \mathbb P_\xi^\pi(r_u\ngeq 0\mid h_{1:t}).\label{eq:alphabeta}
\end{align}
The expectation in \labelcref{eq:alphabeta} is only over $a_{t+1}$, so it does not depend on the environment.

In the following, history elements with non-positive time subscripts are treated as empty tokens which occur with probability one. For all $k$, we have
\begin{align*}
\frac{\xi_b(\obsrew_{1:ks+b}\parallel a_{1:ks+b})}
{\xi(\obsrew_{1:ks+b}\parallel a_{1:ks+b})}
&= \prod_{t=1}^{ks+b} \frac{\xi_b(\obsrew_t\mid h_{<t}a_t)}
{\xi(\obsrew_t\mid h_{<t}a_t)}
\\&= \prod_{t=1}^{ks+b} \frac{1-\alpha_{t,\,\lceil\frac{t-b}{s}\rceil s+b}}
{1-\beta_{t-1,\,\lceil\frac{t-b}{s}\rceil s+b}}
\\&= \prod_{i=0}^k\prod_{t=(i-1)s+b+1}^{is+b} \frac{1-\alpha_{t,\,is+b}}
{1-\beta_{t-1,\,is+b}}.
\end{align*}
Taking averages of logarithms,
\begin{align*}
&\expect_\mu^\pi\left[
\ln\frac{\xi_b(\obsrew_{1:ks+b}\parallel a_{1:ks+b})}
{\xi(\obsrew_{1:ks+b}\parallel a_{1:ks+b})}
\right]
\\&= \expect_\mu^\pi\left[\sum_{i=0}^k\sum_{t=(i-1)s+b+1}^{is+b}\left(
\ln\left(1-\alpha_{t,\,is+b}\right)
- \ln\left(1-\beta_{t-1,\,is+b}\right)
\right)\right]
\\&= \expect_\mu^\pi\left[\sum_{i=0}^k\sum_{t=(i-1)s+b+1}^{is+b}\left(
\ln\left(1-\alpha_{t,\,is+b}\right)
- \expect_\mu^\pi\left[\ln\left(1-\beta_{t-1,\,is+b}\right)\mid h_{<t}\right]
\right)\right]
\\&\ge \expect_\mu^\pi\left[\sum_{i=0}^k\sum_{t=(i-1)s+b+1}^{is+b}\left(
\ln\left(1-\alpha_{t,\,is+b}\right)
- \ln\left(1-\expect_\mu^\pi\left[\beta_{t-1,\,is+b}\mid h_{<t}\right]
\right)\right)\right]
\\&= \expect_\mu^\pi\left[\sum_{i=0}^k\sum_{t=(i-1)s+b+1}^{is+b}
\ln \frac{1-\alpha_{t,\,is+b}}{1-\alpha_{t-1,\,is+b}}
\right] \qquad\text{(using \Cref{eq:alphabeta})}
\\&= \expect_\mu^\pi\left[\sum_{i=0}^k
\ln \frac{1}{1-\alpha_{(i-1)s+b,\,is+b}}
\right] \qquad\qquad\qquad\text{(telescoping series)}
\\&\ge \expect_\mu^\pi\left[\sum_{i=0}^k
\alpha_{(i-1)s+b,\,is+b}
\right]. \qquad\qquad\qquad\qquad\text{(using }-\ln(1-x) \ge x\text{)}
\end{align*}

Hence,
\begin{align*}
\expect_\mu^\pi\left[\sum_{i=0}^k\alpha_{(i-1)s+b,\,is+b}\right]
&\le\expect_\mu^\pi\left[
\ln\frac{\xi_b(\obsrew_{1:ks+b}\parallel a_{1:ks+b})}
{\xi(\obsrew_{1:ks+b}\parallel a_{1:ks+b})}
\right]
\\&=\expect_\mu^\pi\left[
\ln\frac{\mu(\obsrew_{1:ks+b}\parallel a_{1:ks+b})}
{\xi(\obsrew_{1:ks+b}\parallel a_{1:ks+b})}
- \ln\frac{\mu(\obsrew_{1:ks+b}\parallel a_{1:ks+b})}
{\xi_b(\obsrew_{1:ks+b}\parallel a_{1:ks+b})}
\right]
\\&\le \ln\frac{1}{w_\mu}
- \expect_\mu^\pi\left[\ln\frac{\mu(\obsrew_{1:ks+b}\parallel a_{1:ks+b})}
{\xi_b(\obsrew_{1:ks+b}\parallel a_{1:ks+b})}\right]
\qquad\text{(using }\xi(\cdot)\ge w_\mu\mu(\cdot)\text{)}
\\&= \ln\frac{1}{w_\mu}
- \expect_\mu^\pi\left[\ln\frac{\mathbb P_\mu^\pi(h_{1:ks+b})}
{\mathbb P_{\xi_b}^\pi(h_{1:ks+b})}\right] \qquad\qquad\qquad\text{(using \Cref{eq:joint2})}
\\&\le \ln\frac{1}{w_\mu}\qquad\qquad\quad\text{(by the positivity of Kullback-Leibler divergence)}.
\end{align*}
Taking the limit $k\rightarrow\infty$ yields
\begin{equation}
\label{eq:spacedalpha}
\expect_\mu^\pi\left[\sum_{i=0}^\infty\alpha_{(i-1)s+b,\,is+b}\right]
\le \ln\frac{1}{w_\mu}.
\end{equation}
Finally, we aggregate the desired sum:
\begin{align*}
\expect_\mu^\pi\left[\sum_{t=0}^\infty\alpha_{t,t+s}\right]
&= \expect_\mu^\pi\left[\sum_{i=1}^\infty\sum_{b=0}^{s-1}\alpha_{(i-1)s+b,\,is+b}\right]
\\&= \sum_{b=0}^{s-1}\expect_\mu^\pi\left[\sum_{i=1}^\infty\alpha_{(i-1)s+b,\,is+b}\right] &\text{(by linearity of expectation)}
\\&\le \sum_{b=0}^{s-1}\ln\frac{1}{w_\mu} &\text{(by \Cref{eq:spacedalpha})}
\\&= s\ln\frac{1}{w_\mu}.
\end{align*}
\end{proof}

\thmtrigger*
\begin{proof}
Since we always have $r_{t+s+i} \ge -L$ when it is defined,
\[\expect_\xi^\pi(r_{t+s+i}\mid h_{<t+s}) \ge -L\cdot \mathbb P_\xi^\pi(r_{t+s+i}\not\ge 0\mid h_{<t+s}),\]
and likewise any non-negative random variable $X$ satisfies $\mathbb E(X)\ge \mathbb P(X\ge 1)$. Hence,
\begin{align*}
\mathbb P_\xi^\pi\left[V_\xi^\pi(h_{<t+s}) \le -1 \mid h_{<t}\right]
&= \mathbb P_\xi^\pi\left[(1-\gamma)\sum_{i=0}^\infty\gamma^i\expect_\xi^\pi(r_{t+s+i}\mid h_{<t+s}) \le -1 \,\Big|\, h_{<t}\right]
\\&\le \mathbb P_\xi^\pi\left[L(1-\gamma)\sum_{i=0}^\infty\gamma^i\mathbb P_\xi^\pi(r_{t+s+i}\not\ge 0\mid h_{<t+s}) \ge 1 \,\Big|\, h_{<t}\right]
\\&\le \expect_\xi^\pi\left[L(1-\gamma)\sum_{i=0}^\infty\gamma^i\mathbb P_\xi^\pi(r_{t+s+i}\not\ge 0\mid h_{<t+s}) \,\Big|\, h_{<t}\right]
\\&= L(1-\gamma)\sum_{i=0}^\infty\gamma^i\expect_\xi^\pi\left[\mathbb P_\xi^\pi(r_{t+s+i}\not\ge 0\mid h_{<t+s}) \mid h_{<t}\right]
\\&= L(1-\gamma)\sum_{i=0}^\infty\gamma^i\mathbb P_\xi^\pi(r_{t+s+i}\not\ge 0\mid h_{<t})
\\&= L(1-\gamma)\sum_{i=0}^\infty\gamma^i\alpha_{t-1,\,t+s+i}.
\end{align*}

Finally, we average the sum over all $t$ and apply \Cref{lem:alpha}:
\begin{align*}
\expect_\mu^\pi\left[
\sum_{t=1}^\infty \mathbb P_\xi^\pi\left[V_\xi^\pi(h_{<t+s}) \le -1 \mid h_{<t}\right]
\right]
&\le L(1-\gamma)\sum_{i=0}^\infty\gamma^i\sum_{t=1}^\infty \expect_\mu^\pi\left[
\alpha_{t-1,\,t+s+i}
\right]
\\&\le L(1-\gamma)\sum_{i=0}^\infty\gamma^i(s+i+1)\ln\frac{1}{w_\mu}
\\&= L(1-\gamma)\gamma^{-s}\frac{d}{d\gamma}\left(\sum_{i=0}^\infty \gamma^{s+i+1}\right) \ln\frac{1}{w_\mu}
\\&= L(1-\gamma)\gamma^{-s}\frac{d}{d\gamma}\left(\frac{\gamma^{s+1}}{1-\gamma}\right) \ln\frac{1}{w_\mu}
\\&= L(1-\gamma)\gamma^{-s}\gamma^s\left(\frac{s}{1-\gamma}+\frac{1}{(1-\gamma)^2}\right) \ln\frac{1}{w_\mu}
\\&= L\left(s+\frac{1}{1-\gamma}\right) \ln\frac{1}{w_\mu}.
\end{align*}
\end{proof}

Now we are ready to prove the main capability result.

\thmcapability*
\begin{proof}
Using \labelcref{eq:marginalize_state} and convexity of the function $x\mapsto\max\left(0,\,x\right)^2$,
\begin{align*}
&\left(\max_{\tau\in\mathcal T_\epsilon\cup\{\golden\}}V_\mu^{\tau}(h_{<t}) - V_\mu^{\golden}(h_{<t})\right)^2
\\&= \max\left(0,\;\max_{\tau\in\mathcal T_\epsilon}V_\mu^{\tau}(h_{<t}) - \expect_\mu^\golden\left[V_\mu^\golden(h_{<t},\sigma_t)\mid h_{<t}\right]\right)^2
\\&\le \expect_\mu^\golden\left[\max\left(0,\;\max_{\tau\in\mathcal T_\epsilon}V_\mu^{\tau}(h_{<t}) - V_\mu^\golden(h_{<t},\sigma_t)\right)^2\,\Big|\,h_{<t}\right].
\end{align*}
The law of total probability then implies the first inequality to be shown.

Now recall that rewards and value functions are in $[0,\,1]$ under $\mu$, and in $[-L,\,1]$ under $\xi$. It will be useful to define the $n$-step truncated value function
\begin{align*}
V_\nu^{\pi,n}(h_{<t})
:= (1-\gamma)\expect_\nu^{\pi}\left[
\sum_{s=0}^{n-1} \gamma^s r_{t+s}
\,\Big|\, h_{<t}\right],
\end{align*}
which satisfies
\begin{align*}
V_\nu^{\pi}(h_{<t}) - V_\nu^{\pi,n}(h_{<t})
= (1-\gamma)\expect_\nu^{\pi}\left[
\sum_{s=n}^\infty \gamma^s r_{t+s}
\,\Big|\, h_{<t}\right]
\in [-\gamma^nL,\,\gamma^n].
\end{align*}

In addition, we will write $d_n(\mathbb P_\mu^{\golden},\,\mathbb P_\xi^{\golden}\mid h_{<t},\sigma_t)$ for the total variation distance between the corresponding pair of conditional probability distributions on $(h_{t:t+n-1},\,\sigma_{t+1:t+n})$. Since $\xi(\cdot)\ge w_\mu\mu(\cdot)$, a standard result on the distributions of sequences (e.g., Theorem 3.2.3 in \citet{hutter2024introduction}) gives
\begin{align}
\label{eq:totalvariation}
\expect_\mu^{\golden}\left[\sum_{t=1}^T 
d_\infty(\mathbb P_\mu^{\golden},\,\mathbb P_\xi^{\golden}\mid h_{<t},\sigma_t)^2\right]
\le \ln\frac{1}{w_\mu}.
\end{align}

Starting from time $t$ with $\sigma_t = 0$, $\golden$ follows a $\xi$-optimal policy until it yields to a mentor. If it yields at time $t+s$, the policy becomes suboptimal by at most $(L+1)\gamma^s$. This can happen if either (1) a random rollout begins, which occurs with probability $\eta(t+s)$, or (2) the safety trigger $V_\xi^*(h_{<t+s})\le -1$ holds. Thus, since $V_\xi^\golden(\cdot)\le V_\xi^*(\cdot)$,

\begin{align*}
&V_\xi^{\tau}(h_{<t}) - V_\xi^{\golden}(h_{<t},0)
\\&\le V_\xi^*(h_{<t}) - V_\xi^{\golden}(h_{<t},0)
\\&\le \sum_{s=0}^\infty(L+1)\gamma^s \left(\eta(t+s) + \mathbb P_\xi^{\golden}\left(V_\xi^*(h_{<t+s})\le -1\mid h_{<t},\sigma_t=0\right)\right)
\\&\le \frac{L+1}{1-\gamma}\eta(t) + (L+1)\sum_{s=0}^\infty\gamma^s \mathbb P_\xi^{\golden}\left(V_\xi^\golden(h_{<t+s})\le -1\mid h_{<t},\sigma_t=0\right).
\end{align*}

If $\sigma_t=0$ and $\tau\in\mathcal T_\epsilon$, then with probability $\epsilon\eta(t)$, the policy's internal state immediately changes to $(H(t),\tau)$. Thus,
\begin{align*}
d_\infty(\mathbb P_\mu^{\golden},\,\mathbb P_\xi^{\golden}\mid h_{<t},\sigma_t=0)
&\ge d_{H(t)}(\mathbb P_\mu^{\golden},\,\mathbb P_\xi^{\golden}\mid h_{<t},\sigma_t=0)
\\&\ge \epsilon\eta(t)d_{H(t)}(\mathbb P_\mu^{\golden},\,\mathbb P_\xi^{\golden}\mid h_{<t},\sigma_t=(H(t),\tau))
\\&= \epsilon\eta(t)d_{H(t)}(\mathbb P_\mu^{\tau},\,\mathbb P_\xi^{\tau}\mid h_{<t}),
\end{align*}
from which we get
\begin{align*}
& (V_\mu^{\tau}(h_{<t}) - V_\xi^{\tau}(h_{<t}))
+ (V_\xi^{\golden}(h_{<t},0) - V_\mu^{\golden}(h_{<t},0))
\\&\le (V_\mu^{\tau,H(t)}(h_{<t}) - V_\xi^{\tau,H(t)}(h_{<t}))
+ (V_\xi^{\golden}(h_{<t},0) - V_\mu^{\golden}(h_{<t},0))
+ (L+1)\gamma^{H(t)}
\\&\le (L+1)d_{H(t)}(\mathbb P_\mu^{\tau},\,\mathbb P_\xi^{\tau}\mid h_{<t})
+ d_\infty(\mathbb P_\mu^{\golden},\,\mathbb P_\xi^{\golden}\mid h_{<t},\sigma_t=0)
+ (L+1)\gamma^{H(t)}
\\&\le \left(\frac{L+1}{\epsilon\eta(t)}+1\right)
d_\infty(\mathbb P_\mu^{\golden},\,\mathbb P_\xi^{\golden}\mid h_{<t},\sigma_t=0)
+ (L+1)\gamma^{H(t)}
\\&\le \frac{L+2}{\epsilon\eta(t)}
d_\infty(\mathbb P_\mu^{\golden},\,\mathbb P_\xi^{\golden}\mid h_{<t},\sigma_t=0)
+ (L+1)\gamma^{H(t)}.
\end{align*}

Adding the two inequalities, if $\tau\in\mathcal T_\epsilon$,
\begin{align*}
&V_\mu^{\tau}(h_{<t}) - V_\mu^{\golden}(h_{<t},0)
\\&\le \frac{L+1}{1-\gamma}\eta(t) + (L+1)\sum_{s=0}^\infty\gamma^s \mathbb P_\xi^{\golden}\left(V_\xi^\golden(h_{<t+s})\le -1\mid h_{<t},\sigma_t=0\right)
\\&\quad+ \frac{L+2}{\epsilon\eta(t)}
d_\infty(\mathbb P_\mu^{\golden},\,\mathbb P_\xi^{\golden}\mid h_{<t},\sigma_t=0)
+ (L+1)\gamma^{H(t)}.
\end{align*}

Since the right-hand size is non-negative and independent of $\tau$,
\begin{align*}
&\max\left(0,\;\max_{\tau\in\mathcal T_\epsilon}V_\mu^{\tau}(h_{<t}) - V_\mu^{\golden}(h_{<t},0)\right)
\\&\le \frac{L+1}{1-\gamma}\eta(t) + (L+1)\sum_{s=0}^\infty\gamma^s \mathbb P_\xi^{\golden}\left(V_\xi^\golden(h_{<t+s})\le -1\mid h_{<t},\sigma_t=0\right)
\\&\quad+ \frac{L+2}{\epsilon\eta(t)}
d_\infty(\mathbb P_\mu^{\golden},\,\mathbb P_\xi^{\golden}\mid h_{<t},\sigma_t=0)
+ (L+1)\gamma^{H(t)}.
\end{align*}

Using Cauchy-Schwarz in the form $(w+x+y+z)^2\le 4(w^2+x^2+y^2+z^2)$,
\begin{alignat*}{2}
&\max\left(0,\;\max_{\tau\in\mathcal T_\epsilon}V_\mu^{\tau}(h_{<t}) - V_\mu^{\golden}(h_{<t},\sigma_t)\right)^2\span\span
\\&\le \indic(\sigma_t \ne 0)
&&+ 4\left(\frac{L+1}{1-\gamma}\eta(t)\right)^2
\\&&&+ 4\left((L+1)\sum_{s=0}^\infty\gamma^s \mathbb P_\xi^{\golden}\left(V_\xi^\golden(h_{<t+s})\le -1\mid h_{<t},\sigma_t\right)\right)^2
\\&&&+ 4\left(\frac{L+2}{\epsilon\eta(t)}
d_\infty(\mathbb P_\mu^{\golden},\,\mathbb P_\xi^{\golden}\mid h_{<t},\sigma_t)\right)^2
\\&&&+ 4\left((L+1)\gamma^{H(t)}\right)^2.
\end{alignat*}

Now, we can bound the expectation of the sum of each of the five terms separately. Since $H(t)$ is increasing, the first term can only be non-zero if a random mentor rollout began during the previous $H(t)$ steps. Therefore,
\begin{align*}
\expect_\mu^{\golden}\left[\sum_{t=1}^{T} \indic(\sigma_t \ne 0)\right]
&= \sum_{t=1}^{T} \mathbb P_\mu^{\golden}(\sigma_t \ne 0)
\\&\le \sum_{t=1}^T H(t)\eta(t)
\\&\le H(T)\sum_{t=1}^T \eta(t)
\\&= \frac{\ln T}{6(1-\gamma)}\sum_{t=1}^T \frac{1}{t^{1/3}}
\\&\le \frac{\ln T}{6(1-\gamma)}\int_0^T \frac{dt}{t^{1/3}}
\\&= \frac{1}{4(1-\gamma)} T^{2/3}\ln T.
\end{align*}
\begin{align*}
\expect_\mu^{\golden}\left[\sum_{t=1}^T\left(\frac{L+1}{1-\gamma}\eta(t)\right)^2
\right]
&= \frac{(L+1)^2}{(1-\gamma)^2}\sum_{t=1}^T \eta(t)^2
\\&= \frac{(L+1)^2}{(1-\gamma)^2}\sum_{t=1}^T t^{-2/3}
\\&\le \frac{(L+1)^2}{(1-\gamma)^2}\int_{0}^T t^{-2/3}dt
\\&= \frac{3(L+1)^2}{(1-\gamma)^2} T^{1/3}.
\end{align*}
\begin{align*}
&\expect_\mu^{\golden}\left[\sum_{t=1}^T\left((L+1)\sum_{s=0}^\infty\gamma^s \mathbb P_\xi^{\golden}\left(V_\xi^\golden(h_{<t+s})\le -1\mid h_{<t},\sigma_t\right)\right)^2
\right]
\\&= \expect_\mu^{\golden}\left[\sum_{t=1}^T\frac{(L+1)^2}{(1-\gamma)^2}\left(\sum_{s=0}^\infty(1-\gamma)\gamma^s \mathbb P_\xi^{\golden}\left(V_\xi^\golden(h_{<t+s})\le -1\mid h_{<t},\sigma_t\right)\right)^2
\right]
\\&\le \expect_\mu^{\golden}\left[\sum_{t=1}^T\frac{(L+1)^2}{(1-\gamma)^2}\sum_{s=0}^\infty(1-\gamma)\gamma^s \left(\mathbb P_\xi^{\golden}\left(V_\xi^\golden(h_{<t+s})\le -1\mid h_{<t},\sigma_t\right)\right)^2
\right]
\\&\le \expect_\mu^{\golden}\left[\sum_{t=1}^T\frac{(L+1)^2}{(1-\gamma)^2}\sum_{s=0}^\infty(1-\gamma)\gamma^s \mathbb P_\xi^{\golden}\left(V_\xi^\golden(h_{<t+s})\le -1\mid h_{<t},\sigma_t\right)
\right]
\\&= \frac{(L+1)^2}{1-\gamma}\sum_{s=0}^\infty\gamma^s\expect_\mu^{\golden}\left[\sum_{t=1}^T \mathbb P_\xi^{\golden}\left(V_\xi^\golden(h_{<t+s})\le -1\mid h_{<t},\sigma_t\right)
\right]
\\&= \frac{(L+1)^2}{1-\gamma}\sum_{s=0}^\infty\gamma^s\expect_\mu^{\golden}\left[\sum_{t=1}^T \expect_\xi^{\golden}\left[\mathbb P_\xi^{\golden}\left(V_\xi^\golden(h_{<t+s})\le -1\mid h_{<t},\sigma_t\right)
\mid h_{<t}\right]\right]
\\&= \frac{(L+1)^2}{1-\gamma}\sum_{s=0}^\infty\gamma^s\expect_\mu^{\golden}\left[\sum_{t=1}^T \mathbb P_\xi^{\golden}\left(V_\xi^\golden(h_{<t+s})\le -1\mid h_{<t}\right)
\right]
\\&\le \frac{(L+1)^2}{1-\gamma}\sum_{s=0}^\infty\gamma^s L\left(s+\frac{1}{1-\gamma}\right) \ln\frac{1}{w_\mu}\qquad\qquad \text{(by \Cref{thm:trigger})}
\\&= \frac{L(L+1)^2}{1-\gamma}\left(\sum_{s=0}^\infty s\gamma^s + \frac{1}{1-\gamma}\sum_{s=0}^\infty\gamma^s\right) \ln\frac{1}{w_\mu}
\\&= \frac{L(L+1)^2}{1-\gamma}\left(\frac{\gamma}{(1-\gamma)^2} + \frac{1}{(1-\gamma)^2}\right) \ln\frac{1}{w_\mu}
\\&\le \frac{2L(L+1)^2}{(1-\gamma)^3} \ln\frac{1}{w_\mu}.
\end{align*}
\begin{align*}
&\expect_\mu^{\golden}\left[\sum_{t=1}^T\left(\frac{L+2}{\epsilon\eta(t)}
d_\infty(\mathbb P_\mu^{\golden},\,\mathbb P_\xi^{\golden}\mid h_{<t},\sigma_t)\right)^2\right]
\\&= \left(\frac{L+2}{\epsilon\eta(T)}\right)^2 \expect_\mu^{\golden}\left[\sum_{t=1}^T 
d_\infty(\mathbb P_\mu^{\golden},\,\mathbb P_\xi^{\golden}\mid h_{<t},\sigma_t)^2\right]
\\&\le \left(\frac{L+2}{\epsilon\eta(T)}\right)^2 \ln\frac{1}{w_\mu} &\text{(by \Cref{eq:totalvariation})}
\\&= \frac{(L+2)^2}{\epsilon^2} T^{2/3} \ln\frac{1}{w_\mu}.
\end{align*}
\begin{align*}
\expect_\mu^{\golden}\left[\sum_{t=1}^T \left((L+1)\gamma^{H(t)}\right)^2\right]
&= (L+1)^2 \sum_{t=1}^T \gamma^{2H(t)}
\\&= (L+1)^2 \sum_{t=1}^T \gamma^{\frac{\ln t}{3(1-\gamma)}}
\\&= (L+1)^2 \sum_{t=1}^T t^{\frac{\ln \gamma}{3(1-\gamma)}}
\\&\le (L+1)^2 \sum_{t=1}^T t^{-1/3} &\text{(since }\ln\gamma \le \gamma-1\text{)}
\\&\le (L+1)^2 \int_0^T t^{-1/3}dt
\\&= \frac{3}{2}(L+1)^2 T^{2/3}.
\end{align*}

Summing all five of these inequalities yields the main result. It implies
\begin{align*}
&\expect_\mu^{\golden}\left[
\#t\le T:\;\max_{\tau\in\mathcal T_{\epsilon/\log T}}V_\mu^{\tau}(h_{<t}) - V_\mu^{\golden}(h_{<t}) \ge \epsilon
\right]
\\&= \sum_{t=1}^T\mathbb P_\mu^\golden\left[\max_{\tau\in\mathcal T_{\epsilon/\log T}}V_\mu^{\tau}(h_{<t}) - V_\mu^{\golden}(h_{<t}) \ge \epsilon\right]
\\&= \sum_{t=1}^T\mathbb P_\mu^\golden\left[\left(\max_{\tau\in\mathcal T_{\epsilon/\log T}\cup\{\golden\}}V_\mu^{\tau}(h_{<t}) - V_\mu^{\golden}(h_{<t})\right)^2 \ge \epsilon^2\right]
\\&\le \sum_{t=1}^T\frac{1}{\epsilon^2}\expect_\mu^\golden\left[\left(\max_{\tau\in\mathcal T_{\epsilon/\log T}\cup\{\golden\}}V_\mu^{\tau}(h_{<t}) - V_\mu^{\golden}(h_{<t})\right)^2\right]
\\&= \frac{1}{\epsilon^2}\expect_\mu^\golden\left[\sum_{t=1}^T\left(\max_{\tau\in\mathcal T_{\epsilon/\log T}\cup\{\golden\}}V_\mu^{\tau}(h_{<t}) - V_\mu^{\golden}(h_{<t})\right)^2\right]
\\&= O\left(\frac{1}{\epsilon^2}\cdot\frac{T^{2/3}}{(\epsilon/\log T)^2}\right)
\\&= O(T^{2/3}\log^2 T),
\end{align*}
as $T$ grows with all parameters held fixed. \Cref{eq:weakoptimality} follows.
\end{proof}

\end{document}